\newtheorem{theorem}{Theorem}
\newtheorem{lemma}[theorem]{Lemma}
\def \RR {\mathbf{R}}
\def \bb {\mathbf{B}}
\def \WW {\mathbf{W}}
\newcommand{\pname}[1]{DeepRebirth}
\begin{document}
% The file aaai.sty is the style file for AAAI Press 
% proceedings, working notes, and technical reports.
%
\title{DeepRebirth: Accelerating Deep Neural Network Execution on Mobile Devices}
%\author{Paper ID 3448}

\author{Dawei Li$^1$, Xiaolong Wang$^1$\thanks{Corresponding author.}, Deguang Kong\\
$^1$Samsung Research America, Mountain View, CA\\
{\tt\small \{dawei.l, xiaolong.w\}@samsung.com}, \{doogkong\}@gmail.com}

%\author{Dawei Li$^1$, Xiaolong Wang$^1$, Deguang Kong$^2$\\
%$^1$Samsung Research America, Mountain View, CA; $^2$Yahoo Research, Sunnyvale, CA\\
%{\tt\small \{xiaolong.w, dawei.l\}@samsung.com}, \{doogkong\}@gmail.com}

\maketitle
\begin{abstract}
Deploying deep neural networks on mobile devices is a challenging task. Current model compression methods such as matrix decomposition effectively reduce the deployed model size, but still cannot satisfy real-time processing requirement. This paper first discovers that the major obstacle is the excessive execution time of non-tensor layers such as pooling and normalization without tensor-like trainable parameters.  This motivates us to design a novel acceleration framework: \pname{} through {\color{black}{ \lq\lq slimming'' existing consecutive and parallel non-tensor and tensor layers.} } {\color{black}{The layer slimming is executed at different substructures: (a) streamline slimming by merging the consecutive non-tensor and tensor layer vertically}}; (b) {\color{black}{branch slimming by merging non-tensor and tensor branches horizontally.}}   The proposed {\color{black}{optimization operations}} significantly accelerate the model execution and also greatly reduce the run-time memory cost since the slimmed model architecture contains less hidden layers. To maximally avoid accuracy loss, the parameters in new generated layers are learned with layer-wise fine-tuning based on both theoretical analysis and empirical verification. As observed in {\color{black}{the}} experiment, \pname{}  achieves more than 3x speed-up and 2.5x run-time memory saving on GoogLeNet with only 0.4\% drop of top-5 accuracy on ImageNet. Furthermore, by combining with other model compression techniques,  \pname{}  offers an average of $65 ms$ inference time on the CPU of Samsung Galaxy S6 with 86.5\% top-5 accuracy, 14\% faster than SqueezeNet which only has a top-5 accuracy of 80.5\%. 
\end{abstract}

\begin{figure}[t]
	\centering
	\epsfig{file=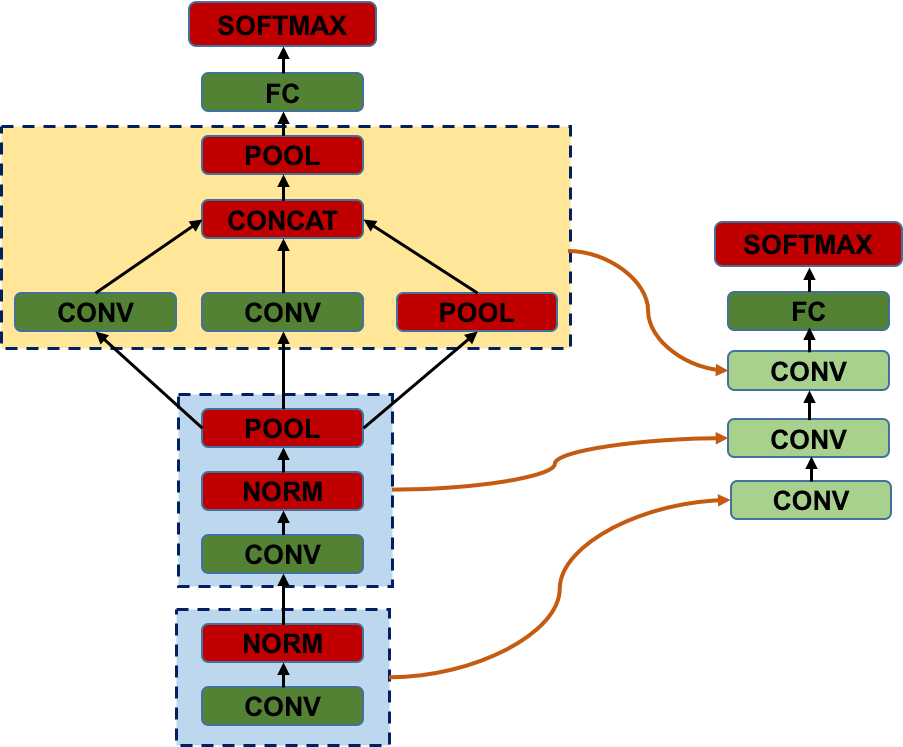, width=0.4\textwidth}
	\caption{An illustration of proposed \pname{} model acceleration {\color{black}{pipeline}}. \pname{} optimizes a trained deep learning model (left) to an accelerated ``slim" model (right). Such optimization is achieved with two {\color{black}{operations}}: \textit{Streamline Slimming} which absorbs non-tensor layers (i.e., pooling and normalization) to their bottom convolutional layer (in light blue background) and \textit{Branch Slimming} which {\color{black}{absorbs}} non-tensor branches and convolutional branches with small convolution filters (e.g., 1x1) to a convolutional branch with large convolution filter (e.g., 5x5) (in light yellow background). We name new {\color{black}{generated}} layers as slim layers.}
	\label{fig:deepslim}
\end{figure}

\section{Introduction}

Recent years have witnessed the breakthrough of deep learning techniques for many computer vision tasks, e.g., image classification~\cite{alexnet,googlenet}, object detection and tracking~\cite{fasterrcnn,POI,du_detection}, video understanding~\cite{donahue2015long,Li_2017_ICCV}, content generation~\cite{goodfellow2014generative,zhang2017age}, disease diagnosis~\cite{shen2017deep,DBLP:journals/corr/abs-1709-00042} and privacy image analytics~\cite{DBLP:conf/aaai/TranKJ016}. %Mobile device becomes more and more popular due to its convenient mobiles services provided for end users.  
More and more mobile applications {\color{black}{adopt deep learning techniques}} to provide accurate, intelligent and effective services.  
However, the execution speed of deep learning models on mobile devices becomes a bottleneck for deployment of many applications due to limited computing resources. %However, if deep learning service is only provided at the cloud side, transmission of images though internet may compromise image owners' privacy and is also limited by the availability of the Internet. 

In this paper, we focus on improving the execution efficiency of deep learning models on mobile devices, which is a highly intriguing feature. Here we define the execution efficiency as the model inference speed, the energy cost and the run-time memory consumption. 
In reality, it takes more than 651ms to recognize an image using GoogleNet on Samsung S5 (Table 4) with 33.2 MB run-time memory and 984mJ energy costs (Table 5). The effective solution is expected to provide minimum accuracy loss by leveraging widely used deep neural network architectures (such as GoogLeNet and ResNet) with support of {\color{black}{deep model acceleration}} on different types of layers.

% In reality, it takes more than 651ms to recognize an image using GoogleNet on Samsung S5 (Table 4) with 33.2 MB run-time memory and 984mJ energy costs (Table 5). The effective solution is expected to provide minimum accuracy loss by leveraging widely used deep learning framework (such as GoogLeNet and ResNet) with support of {\color{black}{deep model acceleration}} on different types of layers. 

\begin{figure*}[!t]
	\centering
	\begin{subfigure}{.32\textwidth}
		\centering
		\includegraphics[width=0.9\linewidth]{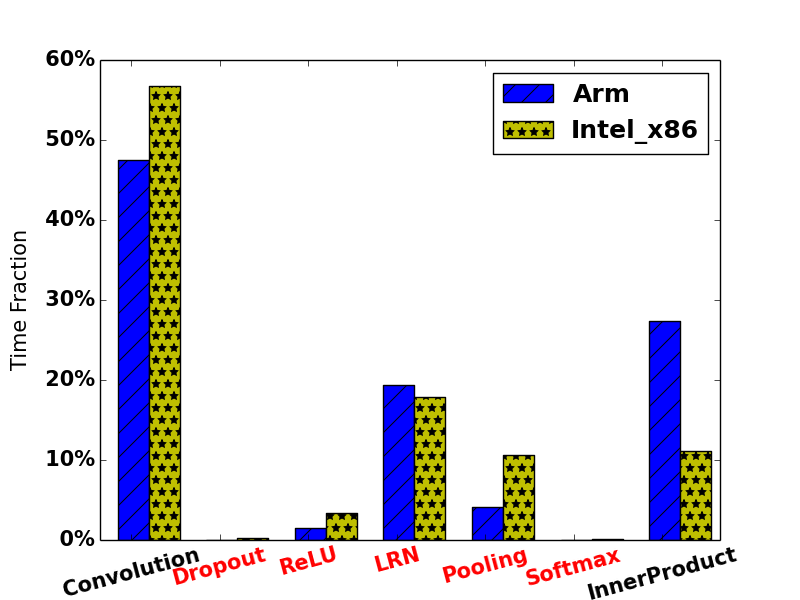}
		\caption{AlexNet}
		\label{fig:alex_fine}
	\end{subfigure}%
	\begin{subfigure}{.32\textwidth}
		\centering
		\includegraphics[width=0.9\linewidth]{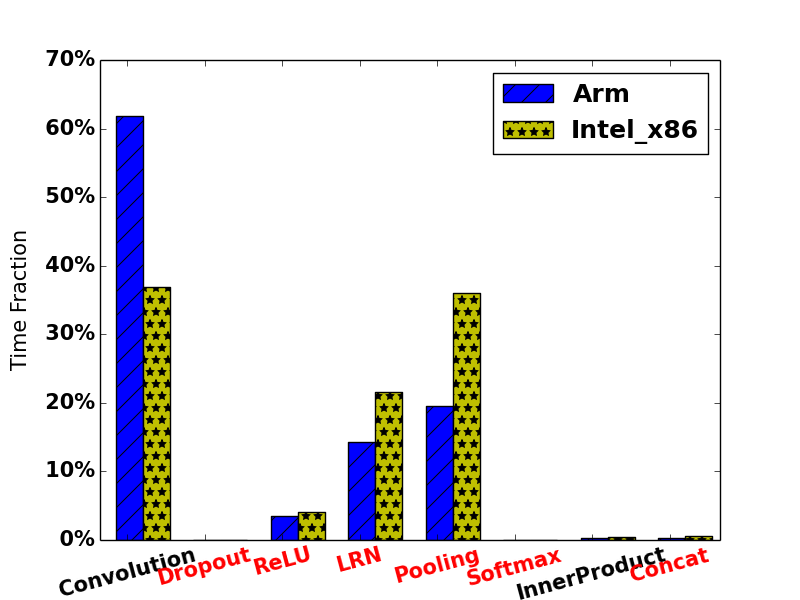}
		\caption{GoogLeNet}
		\label{fig:google_fine}
	\end{subfigure}%
	\begin{subfigure}{.32\textwidth}
		\centering
		\includegraphics[width=0.9\linewidth]{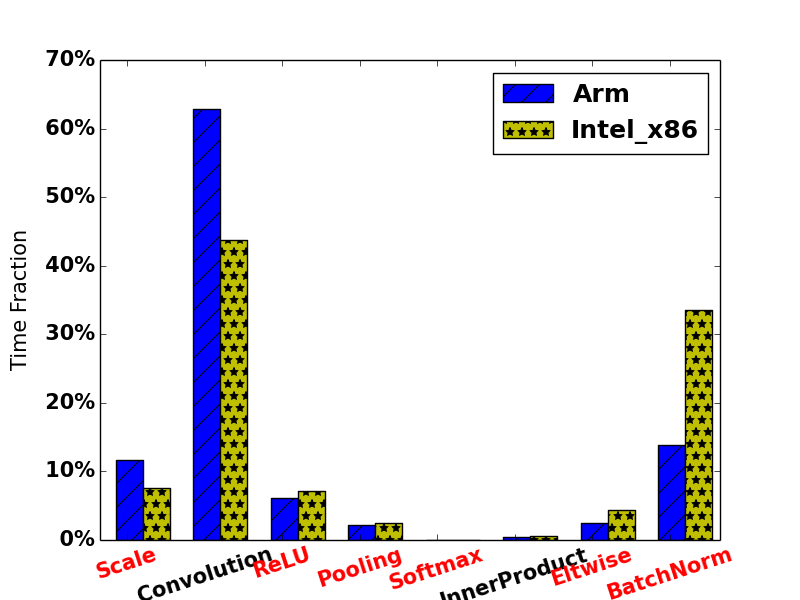}
		\caption{ResNet-50}
		\label{fig:resnet50_fine}
	\end{subfigure}
	\caption{Time Decomposition for each layer. Non-tensor layers (e.g., dropout, ReLU, LRN, softmax, pooling, etc) shown in red color while tensor layers (e.g., convolution, inner-product) shown in black color.}
	\label{fig:layers_fine}
\end{figure*}

{\bf Excessive execution time in Non-tensor layers}

In this work, we find that {\color{black}{non-tensor}} layers consume too much time in model execution (shown in Fig.~\ref{fig:layers_fine}) where \emph{tensor layer} and \emph{non-tensor layer}
%In general, all neural network layers can be grouped into two categories: \emph{tensor layer} and \emph{non-tensor layer} 
are defined based on whether the layer contains tensor-type parameters. For example,  
fully connected layers and convolutional layers are tensor layers since they contain 2-d and 4-d tensor-type weight parameters, respectively. Whereas pooling layer and LRN layer are both non-tensor layers because they do not contain any high-order tensor-type weight parameters. Motivated by this, 
this paper proposes \pname{}, a new deep learning model acceleration framework that significantly reduces the execution time on non-tensor layers. In particular, 
%by exploring the sparsity of model palayers to accelerate both non-tensor layers and tensor layers from two directions: 
we paid our efforts in two directions: (a) \emph{streamline slimming}; (b) \emph{branch slimming}. In streamline slimming, the new tensor layers are re-generated by {\color{black}{substituting}} the original non-tensor layers and their neighborhood tensor layers in the feed-forward model (shown in Figure \ref{fig:layer_merging}), while in branch slimming, the newly generated tensor layers are created by fusing non-tensor {\color{black}{branches}} with their parallel tensor branches horizontally (shown in Figure \ref{fig:branch_merging}, such as the inception module in GoogLeNet~\cite{googlenet}). Overall, reducing the execution time on non-tensor layers can greatly reduce the model inference time given the fact that tensor-layer has been able to get optimized to the minimum as suggested by ~\cite{DeepCompression,kimtucker}. Finally, we can combine both non-tensor and tensor layer optimization and further reduce {\color{black}{the}} latency as well as the model size.  % To summarize, this paper makes the following contributions. 

\begin{table}%[t]
	\caption{Compare \pname{} with Existing Acceleration Methods on CPU of Samsung Galaxy S5 Mobile Device.}
	\label{compare}
	\begin{center}
		\resizebox{0.46\textwidth}{!}{%
			\begin{tabular}{ccccc}
				\multicolumn{1}{c}{\bf }  &\multicolumn{1}{c}{\bf 
					\begin{tabular}{@{}c@{}}Parameter Compression\footnote{The accuracy reported here is based on an compression rate of roughly 50\%. In the original paper, the authors reported a small 0.24\% accuracy loss with compressed rate 31.9\%. For our model at the same 31.9\% compression rate, we also only have a small 0.31\% accuracy loss. } \\ \cite{kimtucker}\end{tabular}
					
				} &\multicolumn{1}{c}{\bf 
					\begin{tabular}{@{}c@{}}SqueezeNet \\ \cite{SqueezeNet}\end{tabular}
				} &\multicolumn{1}{c}{\bf 
					\begin{tabular}{@{}c@{}}MobileNet\footnote{We use the Caffe implementation of 0.5 MobileNet-224 which has similar speed with our model.}  \\ \cite{mobilenets}\end{tabular}
				} &\multicolumn{1}{c}{ \bf
					\begin{tabular}{@{}c@{}}\pname{} \\ (ours)\end{tabular}
				}\\ 
				\hline \\
				Accuracy        & 85.7\%  & 80.3\% & 83.7\% & \textbf{86.5\%}\\
				Execution Time    & 558.3 ms  & 122.7 ms & 109.5 ms & \textbf{106.3 ms}\\
				Energy Cost   & 902 mJ  & 288 mJ & 243 mJ & \textbf{226 mJ}\\
				Memory Cost   & 35.8 MB  & 36.5 MB  & 22.6 MB & \textbf{14.8 MB}\\
		\end{tabular}}
	\end{center}
\end{table}

{\bf Difference with existing works} The central idea of  \pname{} is based on the acceleration of non-tensor layers because
%two types of layers:  {}
\emph{non-tensor layers are {\color{black}{major obstacles}} for real-time mobile CPU execution (\S \ref{sec:non-tensor}). }   
%However, the current solutions for deep learning model acceleration are still quite limited in addressing  thes{}e  challenges (Discussed in Table \ref{compare}). 
Compared to existing works,   ~\cite{DeepCompression,kimtucker,cvpr2017} are designed to  reduce the model size by approximating the tensor-type layers using methods like low rank approximation and quantization.  For non-tensor layers (e.g., normalization and pooling layers) which are generally designed and used for speeding up the network training and obtaining better generalization performance, optimization for faster execution has \emph{not} been discussed so far.  In this paper, we emphasize and validate experimentally that the proposed method is orthogonal to compression techniques on tensor-type layers. Consequently,
our method can be combined with these techniques for further acceleration.

To summarize, we make the following contributions: 

$\bullet$  \pname{} is the first work that identifies {\color{black}{the}} excessive execution time of non-tensor layers is the major obstacle for real-time deep model processing on mobile devices.

$\bullet$  \pname{} is also the first work that focuses on optimizing non-tensor layers and significantly accelerates a deep learning model on mobile devices while reducing the required runtime-memory with less layers.%since there are less layers in the reconstructed deep learning model

%\footnote{Weights approximation method such as Tucker Decomposition effectively reduces the model size (i.e., the number of learned weights) and thus reduce the storage cost on hard drive. However, since the decomposition methods increase the number of layers of the model, the actual runtime-memory (RAM) cost (which is much more scarce resource than hard drive storage) can be even larger than the model before decomposition. }.

$\bullet$ \pname{} performs both streamline slimming and branch slimming by merging non-tensor layers with its neighboring tensor layers vertically and horizontally,  where the new generated tensor layer parameters are re-trained in a principled way that achieves the same functionality as the original layers.   %generated by merging non-tensor layers with its neighboring sparse tensor layers vertically and horizontally.  %feed-forofusing non-tensor banches with the sparse tensor branches 

$\bullet$  \pname{}  obtained the state-of-the-art speeding up on popular deep learning models with negligible accuracy loss, which enables GoogLeNet to achieve 3x-5x speed-up for processing a single image with only 0.4\% drop on Top-5 accuracy on ImageNet without any weights compression method. \pname{}  achieves around $106.3$ ms for processing a single image with Top-5 accuracy  {\color{black}{up to}} 86.5\%.
% Furthermore, we show that our methods work for state-of-the-art non-tensor layers, e.g., batch normalization, in very deep neural network models such as ResNet \cite{resnet}.

\section{Non-tensor layer execution latency}
\label{sec:non-tensor}

\begin{table}[t]
	\caption{Percentage of Forwarding Time on Non-tensor Layers}
	\label{wf-impact}
	\begin{center}
		\resizebox{0.34\textwidth}{!}{%
			\begin{tabular}{cccc}
				\multicolumn{1}{c}{\bf Network}  &\multicolumn{1}{c}{\bf Intel x86} &\multicolumn{1}{c}{\bf Arm} &\multicolumn{1}{c}{\bf Titan X}\\ 
				\hline \\
				AlexNet         & 32.08\%  & 25.08\% & 22.37\%\\
				GoogLeNet    & 62.03\%  & 37.81\% & 26.14\%\\
				ResNet-50   & 55.66\%  & 36.61\% & 47.87\%\\
				ResNet-152    & 49.77\%  & N/A        & 44.49\%\\
				\textbf{Average}   & 49.89\%  & 33.17\%        & 35.22\%\\
		\end{tabular}}
	\end{center}
\end{table}

To give a better understanding of the deep learning model execution latency, we evaluate the execution time cost of different types of layers within a given network structure on several major processors ({\color{black}{Intel}} x86 CPU, Arm CPU and Titan X GPU) using state-of-the-art network structures including AlexNet (Figure \ref{fig:alex_fine}, ~\cite{alexnet}), GoogLeNet(Figure \ref{fig:google_fine}, ~\cite{googlenet}) and  ResNet(Figure \ref{fig:resnet50_fine}, ~\cite{resnet}). 

We define ``percentage non-tensor layer latency" (denoted as $\%$ Latency) as the {\color{black}{time ratio spent}} on non-tensor layers across the {\color{black}{whole}} network, {\it i.e.,} 
\begin{eqnarray}
	\text{ \% Latency} =  \frac{ \text{Time {\color{black}{spent}} on Non-tensor layer}}{ \text{Time {\color{black}{spent}} over the entire network}}, 
\end{eqnarray} 
where larger value indicates the larger execution time cost. 

{\bf Observations and Insights}  The results are shown in Figure \ref{fig:layers_fine} and Table~\ref{wf-impact}. {\color{black}{We can see}}, for classical deep models (e.g., AlexNet), among {\color{black}{these}} non-tensor layers, ``LRN" and ``Pooling" layers are {\color{black}{major}} obstacles that slow-down the model execution. ResNet-50 has abandoned the ``LRN" layers by introducing the \textit{batch normalization} layer, but the findings remain valid as it takes up more than 25\% of the time on ARM CPU and more than 40\% on Intel x86 CPU (in Caffe~\cite{caffe}, it was decomposed into a ``BatchNorm" layer followed by a ``Scale" layer as shown in Figure \ref{fig:resnet50_fine}).  The time fraction spent over such layers ranges from 22.37\% to 62.03\%. Among different types of processors, non-tensor layers have the largest impact on Intel x86 CPUs, and more specifically 62.03\% of the computing time. On the other hand, although non-tensor layers do not have as high affect on the mainstream ARM CPUs, on average they still cost about 1/3 of the computing time.  Therefore, %All these numbers confirm our intuition: 
\emph{there is a great potential to  accelerate models by optimizing non-tensor layers.}

\section{\pname{}}

To reduce the inference time on non-tensor layers, we propose \pname{} to accelerate the model execution at both streamline substructure and branching substructure. The idea of our method is to merge these highly correlated layers and substitute them as a new ``slim'' layer from the analysis and modeling of the correlations of the current layer and preceding layers (or parallel layers).  As in general deep learning models, the probability distribution of the dataset can be represented by these large redundant tensor layers. 
%neural network that is constructed layer by layer.  
%From analyzing the correlations of the current layer and preceding layers (or parallel
%layers), we can merge {\color{red}{these}} highly correlated layers and substitute it as a new ``rebirth'' layer.  
This process is similar to 
viewing the Inception model as a logical culmination as suggested by \cite{AroraBGM13}. 
\pname{} covers two major components: (a) streamline slimming; (b) branch slimming; which will be illustrated in the following.

\subsection{Streamline Slimming}

For deep network architecture with streamline layer connections,  in order to accelerate the execution, we first identify the layers which have large latency and redundancy. The slimming design is motivated by the key observations: % the following two key observations. 
%\begin{itemize}
%\item 

$\bullet$ Non-tensor layers usually follow a tensor layer such as convolution layer as shown in Figure \ref{fig:layer_merging}. 
%\item 

$\bullet$ Several consecutive layers can be viewed as a black box for non-linear transformations, and therefore this can be replaced by a new tensor-layer by parameter learning  to simulate the functionality of original several layers (Figure~\ref{fig:layer_merging}).
%\end{itemize}

{\bf Method}  The streamline slimming regenerates a new tensor layer (i.e., slim layer) by merging non-tensor layers with its bottom tensor units in the feed-forward structure. After layer-wise regeneration, we retrain the deep neural network model by fine-tuning the parameters of the new generated layers. There are two types of streamline slimming in the proposed scheme. The choice of operation depends on the type of non-tensor layers. 

%\begin{itemize}
%\setlength\itemsep{-0.15em}
%\item
$\bullet$ {\textit{Pooling Layer}:} The pooling layer down-samples feature maps learned from previous layers. Therefore, to absorb a pooling layer to a convolution layer, we remove the pooling layer and set the stride value of the new convolution layer as the product of the stride values for both the original pooling layer and the convolution layer. With a larger stride value for the new slim layer, it further reduces the computation required for executing the new model.
%\item

$\bullet$ {\textit{Non-Pooling Layer}:} For non-pooling layers such as LRN and batch normalization, we directly prune those layers from the original deep neural network. 
%\end{itemize}

{\bf Example} Figure \ref{fig:layer_merging}  illustrates how the streamline slimming works.  This is one representative part in GoogLeNet  where the convolution layer $conv2/3\times3$ is followed by a LRN layer $conv2/norm2$ and a pooling layer $poo2/3\times3\_s2$ (The ReLU layer with negligible latency is retained to keep accuracy). Before processing,  the 2 non-tensor layers without a single learned parameter weight take even more time than running the convolution layer. After slimming,  we generate a new slim convolution layer $conv2/3\times3\_merge$, the time spent on the new layer is greatly reduced compare to original layers.

\begin{figure}
	\centering
	\epsfig{file=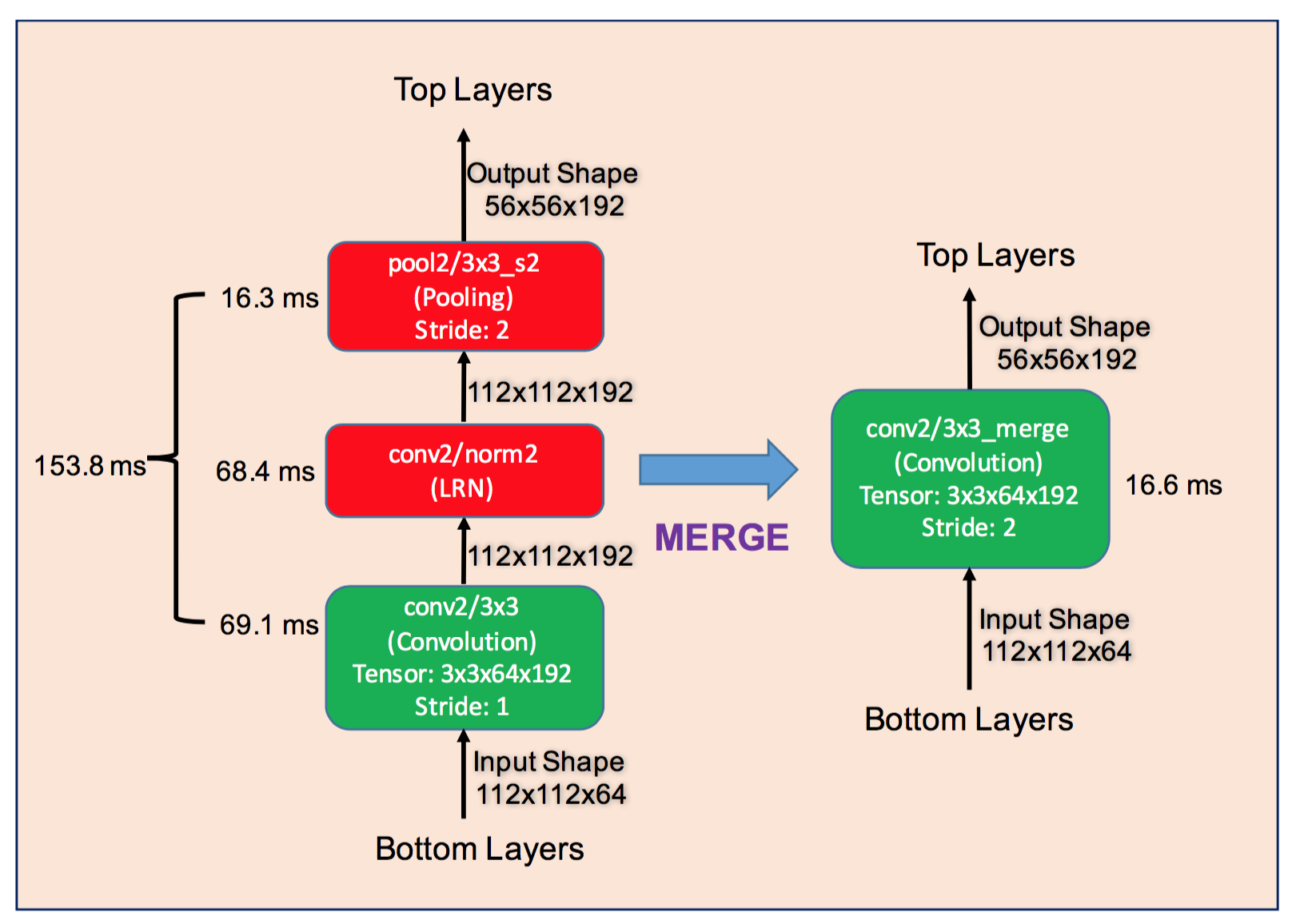, width=0.46\textwidth}
	\caption{Streamline Slimming: The GoogLeNet example and the running time is measured using bvlc\_googlenet model in Caffe on a Samsung Galaxy S5. Left panel:  convolution (in green), LRN (in red), pooling (in red). Right Panel: single convolution layer. The three layers in the left panel are merged and regenerated as a convolution layer (i.e., slim layer) in the right panel.  }
	\label{fig:layer_merging}
\end{figure}

{\bf Theoretical analysis} Given the input image $X^i$, after  several tensor and non-tensor layers, we can get the output feature map   $Y_{\text{CNN}}^i$.
More mathematically,   
\begin{eqnarray}
	\label{EQ:f_map}
	X^i  \xrightarrow{f_{\text{conv}}}  Y^i_{\text{cv}}   \xrightarrow{f_{\text{bn}}}  Y^i_{\text{cv+bn}} \xrightarrow{f_{\text{sl}}}  Y^i_{\text{cv+bn+sl}} \nonumber \\
	\xrightarrow{f_{\text{pooling}}} Y^i_{\text{cv+bn+sl+pl}} \xrightarrow ......:=  Y_{\text{CNN}}^i
\end{eqnarray}
where $f_{\text{conv}}$, ${f_{\text{bn}}} $,  $f_{\text{sl}}$, and $f_{\text{pooling}}$ denote convolution layer, batch normalization layer, scaling layer and pooling layer respectively. There could be other types of layers in the pipeline such as LRN layer $f_{\text{LRN}}$.  %{\color{red}{
The layer parameters are represented by:  %}}: 
\begin{eqnarray}
	\label{EQ:conv_bn_sl}
	\;\;\;\;
	%\kappa=0, \alpha = 1, \beta =1$, this gives $\ell_2$ normalization.  A batch normalization operation~\cite{DBLP:conf/icml/IoffeS15} i
	\begin{cases}
		%\begin{split}
		f_{\text{conv}}:   \WW_{\text{conv}}, \bb_{\text{conv}};   \\ 
		f_{\text{bn}}:  m, \;\;  \bf{\mu}, \;\;  \bf{\sigma^2};   \\ 
		f_{\text{sl}}:   \bf{\gamma}, \;\;  \bf{\beta};  \\ 
		f_{\text{pooling}}:   p; \\ 
		f_{\text{LRN}}:   \kappa, \;\; \rho \;\; \alpha.  \\ 
		\cdots 
		%\end{split}
	\end{cases}
\end{eqnarray}
where  $\WW_{\text{conv}}$,  $\bb_{\text{conv}}$ represent convolution layer weight and  bias matrix respectively,  
$\bf{\mu}$,   $\bf{\sigma^2}$ and $m$ are mean, variance,  and sample number in mini-batch of normalization layer  $f_{\text{bn}}$, % bn_mean, bn_variance, num_bn_samples
{\bf $\gamma$} and  {\bf $\beta$} are scaling weight and bias in scaling layer $f_{\text{sl}}$ respectively, %and  %scale layer: scale_weight, scale_bias
$p$ represents the nearby $p$ regions in pooling layer $f_{\text{pooling}}$, 
and $\kappa$, $\rho$ and $\alpha$ are consecutive feature channel parameters and normalization parameters in LRN layer $f_{\text{LRN}}$.

To achieve the desired functionality with acceleration, the idea is to find a new mapping function $$ \tilde{f}(\tilde{\WW},  \tilde{\bb} )     :\;\;  X^i   \rightarrow  Y_{\text{CNN}}^i, 
$$ such that it can get the same feature map value $ Y_{\text{CNN}}^i$ given the same input feature map $X^i$ for any image $i$.   
Note that operations in Eq.(\ref{EQ:f_map}) transform the feature maps using convolution operations before changing the distributions of activations to avoid ``Internal covariate shift'' in batch normalization ~\cite{DBLP:conf/icml/IoffeS15} at min-batch level, which can be viewed as a new 	``scaling convolution" which transforms the input features in the fully connected layers, and therefore  we  build a single unique convolution operation that replaces several non-tensor layers by setting the new optimization goal,  {\it i.e., }   
\begin{eqnarray}
	\label{EQ:y_i = Y_com}
	\tilde{f}(\tilde{\WW},  \tilde{\bb} )  =:  \tilde{f_{\text{conv}}}(\tilde{\WW_{\text{conv}}},  \tilde{\bb_{\text{conv}}} ) ;  
	%  (\tilde{\WW^*}, \tilde{\bb^*} ) &= argmin_{\WW, \bb}  \sum_i \|  Y_{\text{conv}}^i  -    \tilde{f}( \WW,  \bb;  X^i )    \|_F^2.       \end{split}
\end{eqnarray}
Clearly, the optimal solution is given by: 
\begin{eqnarray}
	\label{EQ:w_b}
	(\tilde{\WW^*}, \tilde{\bb^*} ) = argmin_{\WW, \bb}  \sum_i \|  Y_{\text{CNN}}^i  -    \tilde{f}( \WW,  \bb;  X^i )    \|_F^2.
\end{eqnarray}
More formally, we have  lemma ~\ref{lemma1}.
\begin{lemma}
	\label{lemma1}
	Given the input/output feature map pairs ($X^i, Y^i$)  $\forall i$,  operations on the convolution layers followed by non-tensor layers ({\it e.g.}, normalization layer in  Eq.~\ref{EQ:conv_bn_sl}) can be re-trained by learning the new convolution layer  $ \tilde{f}(\tilde{\WW},  \tilde{\bb}) $ via Eq.(\ref{EQ:w_b}) using SGD. 
\end{lemma}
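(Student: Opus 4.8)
The plan is to show that the composite feed-forward map of Eq.~(\ref{EQ:f_map}), restricted to the convolution together with the subsequent non-tensor layers, already lies — exactly, in the affine case — in the function class realized by a single convolution, so that the regression objective of Eq.~(\ref{EQ:w_b}) is well posed and solvable by SGD. First I would unfold the pipeline layer by layer and observe that the batch-normalization map $f_{\text{bn}}$ and the scaling map $f_{\text{sl}}$ act channel-wise as affine transformations: with the notation of Eq.~(\ref{EQ:conv_bn_sl}), $f_{\text{bn}}$ subtracts $\mu$ and divides by $\sqrt{\sigma^2+\epsilon}$, while $f_{\text{sl}}$ multiplies by $\gamma$ and adds $\beta$. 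Since $f_{\text{conv}}(X^i)=\WW_{\text{conv}}*X^i+\bb_{\text{conv}}$ and convolution is linear in its kernel, I can fold the per-channel scale and shift into the kernel and bias to obtain explicit parameters $\tilde{\WW},\tilde{\bb}$ for which
\begin{eqnarray}
f_{\text{sl}}\circ f_{\text{bn}}\circ f_{\text{conv}}(X^i)\;=\;\tilde{\WW}*X^i+\tilde{\bb},\qquad\forall\, i,
\end{eqnarray}
i.e.\ the three layers collapse exactly into one convolution; average pooling, being a strided uniform-kernel convolution, and an LRN layer whose local normalization denominator is frozen, fold in the same way.

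Next I would handle the pooling layer of the \textbf{Method} paragraph. For average pooling the collapse above is exact; for max pooling, which is genuinely nonlinear, I would not assert exact equality but instead adopt the stride construction — set the stride of $\tilde{f}_{\text{conv}}$ to the product of the pooling and convolution strides — so that $\tilde{f}$ produces feature maps of the correct spatial size, and let the residual discrepancy be absorbed by re-training. This is where Eq.~(\ref{EQ:w_b}) enters, and the key observation is that, because $\tilde{f}(\WW,\bb;X^i)=\WW*X^i+\bb$ is affine in $(\WW,\bb)$ for any fixed stride, the objective $\sum_i\|Y_{\text{CNN}}^i-\tilde{f}(\WW,\bb;X^i)\|_F^2$ is a convex quadratic in the parameters; it is a finite sum over training examples with smooth per-example gradients, so mini-batch SGD on $(\WW,\bb)$ is well defined and its iterates converge to the global minimum. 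When no max-pooling layer is present, that minimum equals $0$ and is attained at the folded $(\tilde{\WW},\tilde{\bb})$ constructed above, so SGD recovers an exact functional replacement of the original sub-network; when max pooling is present, SGD returns the best single-convolution surrogate, and I would close by citing the empirical verification promised in the introduction that this surrogate incurs negligible accuracy loss.

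The main obstacle is precisely the nonlinear non-tensor layers — max pooling and LRN — for which no single convolution reproduces the composite map exactly, so the clean ``collapse into one convolution, minimum $=0$'' argument only fully covers the affine case (convolution followed by batch norm and scale, or by average pooling). For the genuinely nonlinear case the honest content of the lemma is weaker: Eq.~(\ref{EQ:w_b}) is a legitimate, convex, SGD-trainable surrogate whose optimum is a good — but not exact — approximation, a claim that ultimately rests on the experiments rather than on an exactness guarantee. I would therefore be careful in the write-up to separate the statements that are exact algebraic identities (the folding of the affine layers) from those that are approximations justified empirically (the absorption of max pooling and LRN).
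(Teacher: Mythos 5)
Your proposal is correct, and it is worth noting that the paper itself offers no proof of Lemma~\ref{lemma1} at all --- it declares the proof ``obvious'' and skips it, reserving its only real derivation for the special affine case in Lemma~\ref{lemma_tmp} (the closed-form folding of convolution, batch normalization and scaling, Eq.~(\ref{EQ:W_B_S})). Your affine-folding step re-derives exactly that content, so on that portion you and the paper coincide. What you add beyond the paper is the part that actually substantiates the SGD claim in the lemma statement: for a fixed stride the map $(\WW,\bb)\mapsto \WW*X^i+\bb$ is affine in the parameters, hence the objective of Eq.~(\ref{EQ:w_b}) is a convex quadratic finite sum and SGD is well posed and converges to its global minimum --- with minimum value zero precisely in the affine case. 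Your explicit separation of exact identities (conv+BN+scale) from approximations justified only empirically (max pooling, LRN, whose nonlinearity makes an exact single-convolution replacement impossible) is more honest than the paper's framing, which blurs this distinction and leans on the fine-tuning experiments. One small caveat: your claim that average pooling ``folds in the same way'' exactly is only true if the merged convolution is allowed a larger kernel (the conv kernel convolved with the uniform pooling kernel, with stride equal to the product of strides); the paper's recipe of keeping the original kernel and only enlarging the stride is itself an approximation, so average pooling really belongs with max pooling and LRN in your ``approximate, absorbed by re-training'' category rather than the exact one. With that adjustment your argument is sound and strictly more informative than what the paper provides.
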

The proof is obvious and therefore we skip it here. In particular, we have  lemma ~\ref{lemma_tmp}. 
\begin{lemma}
	\label{lemma_tmp}
	Let $W_j$, $B_j$,  $\mu_j$,  $\sigma^2_j$,  $\gamma_j$ and $b_j$ be the corresponding  $j$-th dimension in the reshaped weight vector or bias vector in Eq.(\ref{EQ:conv_bn_sl}),  and $\tilde{W_j}$, $\tilde{B_j}$ be the learned new convolution layer parameter in Eq.(\ref{EQ:w_b}).  Then, 
	if $Y_{\text{CNN}}^i$ is obtained after the three layers of $f_{\text{conv}}$, ${f_{\text{bn}}} $,  $f_{\text{sl}}$ in the sequence order, {\it i.e.,} 
	%\begin{eqnarray}
	%\label{EQ:Y_CNN=cv+bn+sl}
	$Y_{\text{CNN}}^i := Y^i_{\text{cv+bn+sl}}, $
	%\end{eqnarray}
	we have closed form solution for the parameters in the new convolution layer:  % {\it i.e.,}
	\begin{eqnarray}
		\label{EQ:W_B_S}
		\begin{split}
			& \tilde{W_j} =  \eta_j  W_j,  \\
			& \tilde{B_j} = \eta_j B_j  + \beta_j - \eta_j  \frac{\mu_j}{m},  \\
			& \eta_j = \frac{\gamma_j}{ \sqrt{\frac{\sigma^2_j}{m}}}. 
		\end{split}
	\end{eqnarray}
\end{lemma}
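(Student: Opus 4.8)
The plan is to prove the closed form by direct substitution, exploiting the fact that once training is complete the batch-normalization statistics $\mu_j$, $\sigma^2_j$ (and $m$) are frozen, so at inference time $f_{\text{bn}}$ is no longer a mini-batch-dependent operation but the fixed per-channel affine map $z \mapsto (z - \mu_j/m)/\sqrt{\sigma^2_j/m}$; likewise $f_{\text{sl}}$ is the affine map $z\mapsto \gamma_j z + \beta_j$, and $f_{\text{conv}}$ is linear in $X^i$ up to the bias $B_j$. A composition of affine maps is affine, hence realizable by a single convolution layer, so the optimization in Eq.~(\ref{EQ:w_b}) attains objective value zero at the stated parameters, and the stated formulas are simply the coefficients of that composed affine map.

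Concretely, I would reshape the convolution kernel so that each output feature channel corresponds to a single index $j$ and write $Y^i_{\text{cv},j} = W_j X^i + B_j$, where $W_j X^i$ abbreviates the linear convolution response at channel $j$. Applying $f_{\text{bn}}$ gives
\begin{equation*}
Y^i_{\text{cv+bn},j} = \frac{W_j X^i + B_j - \mu_j/m}{\sqrt{\sigma^2_j/m}},
\end{equation*}
and then applying $f_{\text{sl}}$ gives
\begin{equation*}
Y^i_{\text{cv+bn+sl},j} = \gamma_j\,\frac{W_j X^i + B_j - \mu_j/m}{\sqrt{\sigma^2_j/m}} + \beta_j.
\end{equation*}
Setting $\eta_j = \gamma_j/\sqrt{\sigma^2_j/m}$ and collecting the coefficient of $X^i$ together with the constant term yields $Y^i_{\text{cv+bn+sl},j} = \eta_j W_j X^i + \bigl(\eta_j B_j + \beta_j - \eta_j \mu_j/m\bigr)$, which is exactly $\tilde{W_j} X^i + \tilde{B_j}$ with $\tilde{W_j} = \eta_j W_j$ and $\tilde{B_j} = \eta_j B_j + \beta_j - \eta_j \mu_j/m$, as claimed.

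Finally I would observe that this identity holds for every image index $i$ and every output channel $j$, so reassembling the scalars $\tilde{W_j}$, $\tilde{B_j}$ into a kernel and bias defines one convolution layer $\tilde{f_{\text{conv}}}(\tilde{\WW_{\text{conv}}}, \tilde{\bb_{\text{conv}}})$ that reproduces $Y^i_{\text{cv+bn+sl}}$ exactly; in particular it is a global minimizer of Eq.~(\ref{EQ:w_b}) with zero residual, so the SGD fine-tuning of Lemma~\ref{lemma1} can in this case be replaced by the closed-form assignment above. There is no genuine obstacle here: the argument is a routine collection of terms. The only points requiring care are the bookkeeping of the per-channel reshaping, so that the scalar identities at index $j$ legitimately lift to a valid convolution kernel, and noting that the small $\epsilon$ conventionally added inside the batch-norm denominator is either absorbed into $\sigma^2_j$ or taken as negligible, consistent with the notation of Eq.~(\ref{EQ:conv_bn_sl}).
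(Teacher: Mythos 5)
Your proposal is correct and follows essentially the same route as the paper's proof: write the convolution--batch-normalization--scaling composition per channel as $\gamma_j\bigl((W*X)_j + B_j - \mu_j/m\bigr)/\sqrt{\sigma^2_j/m} + \beta_j$, introduce $\eta_j = \gamma_j/\sqrt{\sigma^2_j/m}$, and collect the linear and constant terms to read off $\tilde{W_j}$ and $\tilde{B_j}$. Your added remarks about the zero-residual minimizer of Eq.~(\ref{EQ:w_b}) and the absorbed $\epsilon$ are fine but do not change the argument.
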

\begin{proof}
	Let $Y_j$ be the $j$-th dimension in feature map after convolution operation in Eqs.(\ref{EQ:y_i = Y_com}, \ref{EQ:w_b}), {\it i.e., $Y_j =  \Big(Y_{\text{CNN}}^i\Big)_j$} . On one hand,  based on the definition of convolution operations (denoted as $*$), we have
	\begin{eqnarray}
		\label{EQ:tY_j = tW}
		Y_j = (\tilde W * X)_j  + \tilde{B_j} .
	\end{eqnarray}
	On the other hand, according to the definition of batch normalization~\cite{DBLP:conf/icml/IoffeS15} and scaling, we have 
	\begin{eqnarray}
		\label{EQ:Y_batch_scala_conv}
		\begin{split}
			Y_j &=   \gamma_j  \Big(f_{\text{bn}} \cdot f_{\text{conv}}(X) \Big)_j +  \beta_j,    \;\;\;     \triangleright \text{ Scaling} \\ 
			&=   \gamma_j  \Big(   \frac{f_{\text{conv}}(X)_j    -  \mu_j }  {   \sqrt{\sigma^2_j} }  \Big)+  \beta_j,   \;\;\;   \triangleright \text{ BN} \\ 
			& =  \gamma_j  \Big(   \frac{ (W * X) _j + B_j    -  \frac{\mu_j} {m} }  {   \sqrt{\frac{\sigma^2_j}{m}} }  \Big)+  \beta_j.   \;\;\;   \triangleright \text{ Convolution} 
		\end{split}
	\end{eqnarray}
	Let $ \eta_j = \frac{\gamma_j}{ \sqrt{\frac{\sigma^2_j}{m}}}$, then Eq.(\ref{EQ:Y_batch_scala_conv}) is equivalent to: 
	\begin{eqnarray}
		\label{EQ:Y_j=}
		\begin{split}
			Y_j =   \underbrace{ \eta_j  (W * X)_j }_{weight}  + \underbrace{ \Big( \eta_j B_j  -      \frac{ \eta_j \mu_j} {m}    +   \beta_j \Big) }_{bias}.   
		\end{split}
	\end{eqnarray}
	Compared to Eq.(\ref{EQ:tY_j = tW}), we have 
	$ \tilde{W_j} =  \eta_j  W_j $ and
	$ \tilde{B_j} = \eta_j B_j  + \beta_j - \eta_j  \frac{\mu_j}{m}.$
	This completes the proof. 
\end{proof}

%%%%%%%%%%%%%%%%%%Recall that convolution operation can be viewed as enforcing linear transformation $W$ on the input feature maps in the fully connected layers, and therefore we aim to build a single convolution operation ($*$) that replaces several non-tensor layers by setting a new optimization goal,  {\it i.e., }   
\subsection{Branch Slimming}
%The design of branch merging is motivated by the following key observation. 

\begin{figure*}
	\centering
	\epsfig{file=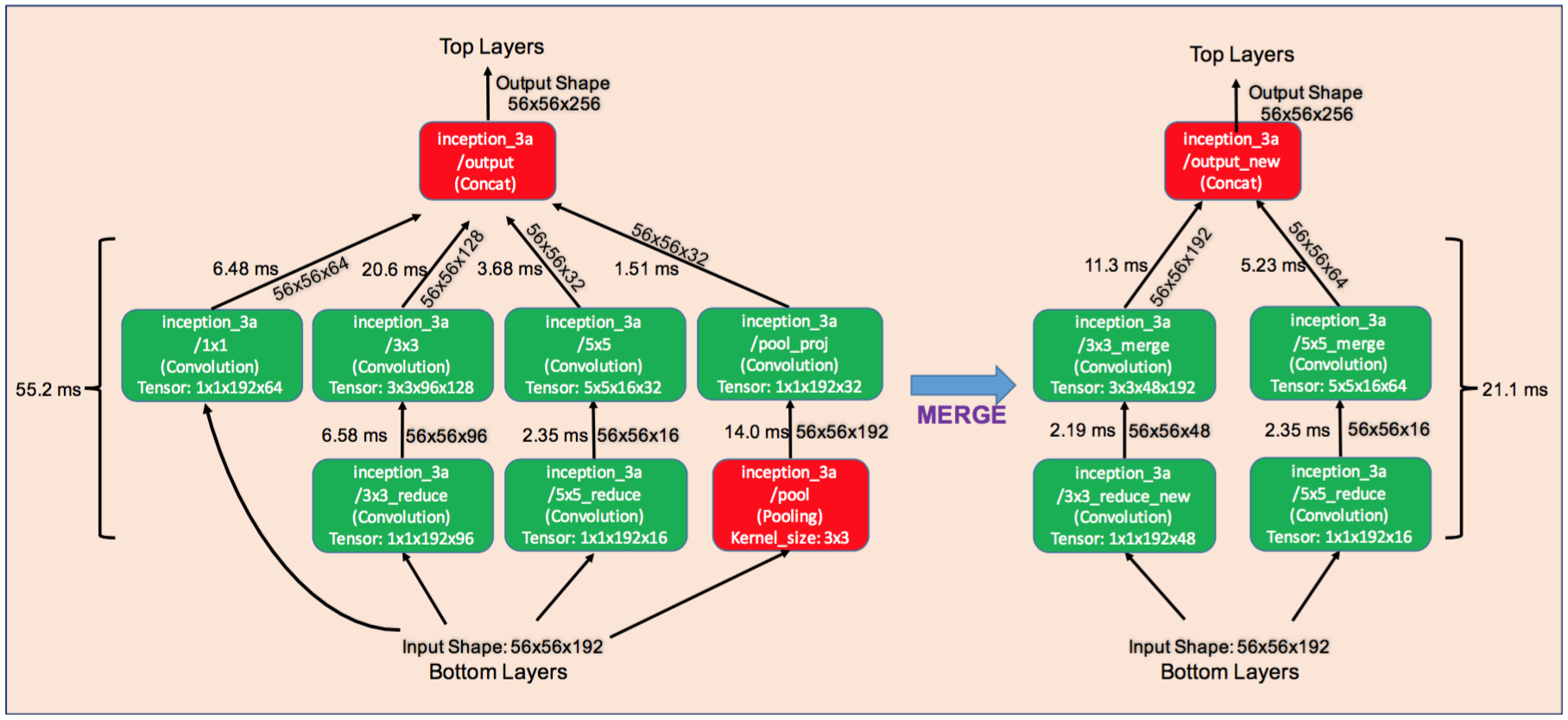, width=0.9\textwidth}%,height=2in}
	\caption{Branch Slimming: The GoogLeNet example and the running time is measured using bvlc\_googlenet model in Caffe on a Samsung Galaxy S5. Left panel: four branches in parallel,  convolution layer, convolution + convolution,  convolution + convolution, convolution + pooling. Right panel: two branches in parallel,  convolution + convolution, convolution + convolution. Two branches are reduced. }
	\label{fig:branch_merging}
\end{figure*}

Given the fact that non-tensor layers require more time on computation, if we can learn new tensor layers by fusing non-tensor layers with the tensor units at the same level, then the the execution time will be decreased.  Then we have the deign of \emph{branch slimming}.

{\bf Example}  
%Fortunately, we find such examples in GoogLeNet.  
One representative unit is the inception module in GoogLeNet. For example, in Figure \ref{fig:branch_merging}, layer ``inception\_3a" of GoogLeNet has 4 branches: 3 convolution branches take feature maps from the bottom layer at various scales ($1\times1$, $3\times3$ and $5\times5$) and one $3\times3$ pooling branch \cite{googlenet}. The output feature maps of each branch are concatenated as input of the following layer.

{\bf Method} For deep network architecture with parallel branches, the output of each branch constitutes part of the feature maps as the input for the next layer. We identify non-tensor branches that have large latency (e.g., the pooling branch in Figure \ref{fig:branch_merging}). Similar to streamline slimming, if we can use a faster tensor branch to simulate the function of the non-tensor branch by relearning its parameters, we can achieve clear speed-up.

To absorb a non-tensor branch into a tensor branch, we re-create a new tensor layer (i.e., slim layer) by fusing the non-tensor branch and a tensor unit with relatively small latency to output the feature maps that were originally generated by the non-tensor branch. If the non-tensor branch has a kernel size larger than $1\times1$ (e.g., the $3\times3$ pooling branch in Figure \ref{fig:branch_merging}), the picked tensor branch's kernel size should be at least the size of the non-tensor branch. As shown in Figure \ref{fig:branch_merging}, we re-learn a new tensor layer ``inception\_3a" by merging the $3\times3$ pooling branch with the $5\times5$ convolution branch at the same level, and the number of feature maps obtained by the $5\times5$ convolution is increased from 32 to 64.

%\begin{itemize}
%\setlength\itemsep{-0.15em}
%\item{
$\bullet$ \textit{Branch Reducing}:
Current deep neural networks usually include convolution branches with $1\times1$ convolution layers (e.g., inception\_3a/3x3\_reduce in Figure \ref{fig:branch_merging}) aiming to reduce feature maps channels. This unit will be processed by a following convolution layer with larger kernel size. For greater speed-up, we further reduce the number of feature maps generated by the $1\times1$ ``reducer". For layer inception\_3a/3x3\_reduce, we reduce the number of output feature maps from 96 to 48. %the number that can be reduced is experimentally studied.

%\item{
$\bullet$ \textit{Tensor-Branch Slimming}:
A convolution branch with a smaller kernel size can be absorbed to a convolution branch with a larger kernel size. The method is similar to the slimming of non-tensor branches. To keep other layers' structures in network unchanged, we remove the small-kernel convolution branch and increase the number of feature maps generated by the large-kernel convolution layers. For examples, for layer inception\_3a/3x3\_reduce, we remove the $1\times1$ convolution branch and increase the number of feature maps generated by the $3\times3$ convolution from 128 to 196. 

Slimming over tensor-branches should be careful. In our work, we demonstrate that in GoogLeNet architecture, the tensor-branch with smaller convolutional kernel can be slimmed without affecting the performance, and thus we are able to reduce 4 branches (3 tensor branches and 1 non-tensor branch) into 2 tensor branches. However, when the original architecture only has 2 tensor branches (e.g., in ResNet), slimming any branch will affect the performance. 
%\end{itemize}

{\bf Branch convolutional layer slimming analysis} 
Let  $Y_{\text{L}}$ and $Y_{\text{R}}$  be the feature map learned using convolution layers respectively given model parameter weight and bias, {\it i.e.,} 
\begin{eqnarray}
	\label{EQ:left+right}
	\begin{cases}
		Y^i_{\text{L}} = {\WW}_{\text{L}}  *  X^i   + {\bb}_{\text{L}};     \;\;\;      \triangleright  \text{    left branch} \\   
		Y^i_{\text{R}} = {\WW}_{\text{R}}  *  X^i  + {\bb}_{\text{R}};     \;\;\;       \triangleright \text{    right branch} 
	\end{cases}
\end{eqnarray}
Let  $Y^i_{\text{L}}$ be the concatenation of feature maps in left and right branches. We wish to learn a new convolution function  $\hat{f}(\WW_{\text{LR}}, \bb_{\text{LR}})$, such that
\begin{eqnarray}
	\label{EQ:LR}
	Y_{LR}^i =[ Y_{\text{left}}^i;  Y_{\text{right}}^i ],   \;\;\; 
	Y_{LR}^i = {\WW}_{\text{LR}} *  X^i   + {\bb}_{\text{LR}}, 
\end{eqnarray}
with $ Y^{L}_i \in \RR^{ M' \times N' \times K'_L} $ and $ Y^{R}_i \in \RR^{ M' \times N' \times K'_R} $ having the same kernel size. 

If ${\WW}_{\text{L}} $ and  ${\WW}_{\text{R}}$ have the same kernel size,  we can get
\begin{eqnarray}
	\label{EQ:W_b}
	{\WW}_{\text{LR}} = [{\WW}_{\text{left}} ;  {\WW}_{\text{right}} ] ,     \;\;\;
	%\end{eqnarray} 
	%\begin{eqnarray}
	%\label{EQ:b}
	{\bb}_{\text{LR}} = [{\bb}_{\text{left}} ;  {\bb}_{\text{right}} ] .   
\end{eqnarray} 
by substituting Eq.(\ref{EQ:left+right}) into Eq.(\ref{EQ:LR}).  
Otherwise, we need to adjust $Y_{\text{L}}$ and $Y_{\text{R}}$ to the same size and learn the model 
parameters by minimizing:  
\begin{eqnarray}
	\label{EQ:wb-dif}
	(\hat{\WW^*}_{\text{LR}}, \hat{\bb^*}_{\text{LR}} ) = argmin_{\hat{\WW},\hat{\bb}} \sum_i \|  Y_{\text{LR}}^i  -  (\hat{\WW} * X^i + \hat{\bb}) \|_F^2. \nonumber
\end{eqnarray}

\subsection{Adapting \pname{} to Overall Pipeline } % How Fine-tuning is Performed?}

\pname{} can be easily applied to a pre-trained deep learning model as modern deep model architectures are well-structured with repeating substructures such as the inception module in GoogLeNet and the residual module in ResNet. Generally, there are three golden rules we need to follow: (1) identify the repeating substructures, (2) determine the input dimension and output dimension for each substructure, and (3) apply either streamline slimming or branch slimming based on the substructure type. 

To reconcile the new learned layer with other parts of model, one further step is to fine-tuning the model parameters\footnote{One exception is the BatchNorm layer which can be directly merged to a preceding convolutional layer using Eq.(\ref{EQ:Y_j=})}, as suggested 
%The new generated layer (i.e., rebirth layer) is required to learn {\color{red}{new}} parameters using fine-tuning as discussed 
in ~\cite{finetune1,finetune2}. In \pname{},  we leverage Xavier \cite{xavier} initialization to initialize the parameters in the new layer while keeping the weights of other layers unchanged. In the optimization procedure, we set the learning rate of new layers 10 times over those in other layers empirically.  Generally, the proposed optimization scheme is applied from the bottom layer to the top layer. Another alternative is to learn multiple slim layers at the same time (we merge and fine-tune 3 sequential inception layers 4b-4d together for GoogLeNet) or merge layers in sequential orders other than from bottom to top. We will explore this discussion in our future work.

\section{Evaluation}
To evaluate the performance of \pname{}, we performed the comprehensive evaluation on top of GoogLeNet,  AlexNet and ResNet. Our implementation is based on Caffe~\cite{caffe} deep learning framework, and we compile it using Android NDK for mobile evaluation. OpenBLAS~\cite{xianyi2014openblas} is used for efficient linear algebra calculations.

\subsection{GoogLeNet}

We use  Caffe's GoogLeNet implementation (i.e., bvlc\_googlenet) with its pre-trained weights. Then we apply the proposed \pname{} optimization scheme to accelerate the running speed of GoogLeNet, which is denoted as ``GoogLeNet-Slim''. 
After non-tensor layer optimization (streamline and branch slimming), we further apply tucker decomposition approach~\cite{kimtucker} to reduce the model size (i.e., the number of learned weights) by 50\%, represented as ``GoogLeNet-Slim-Tucker''.
In addition, we directly employ tucker decomposition method to compress original GoogLeNet. This is indicated as  ``GoogLeNet-Tucker''. Thus, we have 4 variations of GoogLeNet to compare, namely GoogLeNet, GoogLeNet-Slim, GoogLeNet-Tucker and GoogLeNet-Slim-Tucker. We also compare with SqueezeNet \cite{SqueezeNet}, a state-of-the-art compact neural network which includes only 1.2M learnable parameters (vs. 5M for GoogLeNet).

\begin{table}%[b]
	\caption{GoogLeNet Accuracy on Slimming Each Layer}
	\label{layer-accuracy}
	\begin{center}
		\resizebox{0.46\textwidth}{!}{%
			\begin{tabular}{cccc}
				\multicolumn{1}{c}{\bf Step}  &\multicolumn{1}{c}{\bf Slim Layer(s)}&\multicolumn{1}{c}{\bf Top-1 Accuracy} &\multicolumn{1}{c}{\bf Top-5 Accuracy} \\ 
				\hline \\
				0    	& N/A  &  68.72\% & 88.89\%  \\
				1    	& conv1  & 68.65\%  & 88.73\%  \\
				2  	& conv2  & 68.66\% & 88.82\% \\
				3    	& inception\_3a  & 68.35\% & 88.50\%  \\
				4    	& inception\_3b  & 68.21\% & 88.27\%  \\
				5  	& inception\_4a  & 68.34\% & 88.60\%  \\
				6 		& inception\_4b-4d  & 68.31\% &  88.61\%   \\
				7  	& inception\_4e  & 68.26\% &  88.43\%   \\
				8    	& inception\_5a & 68.22\% &  88.41\% \\
				9    	& inception\_5b  & 68.03\% & \textbf{88.43\%}   \\
				\textbf{Tucker Decomposition}   &  \textbf{ALL} & 66.71\%  & \textbf{86.54\%}   \\
		\end{tabular}}
	\end{center}
\end{table}

{\bf \underline{Accuracy}} We evaluate the accuracy loss in contrast to original ones after performing the accelerated models. %Given Since one of our major goals is to propose a new acceleration approach which can speed up the model running time with satisfied accuracy (in constrast to the original model),
The accuracy changing along with the optimization steps conducted on ImageNet ILSVRC-2012 validation dataset are listed in Table \ref{layer-accuracy}. During the whole optimization procedure of model training,  we set the base learning rate for the re-generated layer as 0.01 (the rest layers are 0.001). We apply stochastic gradient descent training method~\cite{sgd} to learn the parameters with a batch size of 32. During our training phase, we set 40,000 as the step size together with 0.1 for gamma value and 0.9 for momentum parameter. At each step, the model generally converges at around 90,000 iterations (2 epochs).

The result indicates that \pname{}  has almost negligible impact on the model accuracy, and the accuracy even increases at certain step (e.g., step 5). This indicates that ``the new-born"  layers perfectly simulate the functionality of previous non-tensor layers before optimization. By applying tucker decomposition method on the slim model to reduce the weights by half (GoogLeNet-Slim-Tucker), we observe that there is a larger drop on accuracy (around 2\%). However, directly applying tucker decomposition method (GoogLeNet-Tucker) to reduce the GoogLeNet weights to a half drops the top-5 accuracy to 85.7\%. %by 
These results %further confirms the effectiveness of \pname, 
imply that our method 
performs reasonable well even after streamline and branch slimming.    
% \textcolor{red}{wait for result!}.

{\bf \underline{Speed-Up}} To evaluate and compare the latency of different optimization approaches, we evaluate the layer-wise running speed on a Samsung Galaxy S5 smartphone with Caffe. Each test run includes 50 subtests with a random input and we report the best test run in terms of forwarding time. During the whole experiment, we turn on the airplane mode and close all other apps. As demonstrated %\footnote{Time spent on non-tensor Layers has been added to the corresponding tensor layers.} 
in Table \ref{layer-speed},  we observe that GoogLeNet-Slim is 3x faster than GoogLeNet. In addition, as pointed \cite{kimtucker}, the original GoogLeNet model has too many small layers and this results in performance fluctuation. In the worst scenario, GoogLeNet takes around 950 ms for a single forwarding while with reduced number of layers, GoogLeNet-Slim takes only up to 250 ms, which is almost 4x speed-up. The Tucker Decomposition method further reduces the computation for around 50\% at the cost of around 2\% accuracy loss. On the other hand, directly applying tucker decomposition on tensor layers doesn't show any significant acceleration. 

\begin{table}
	\caption{Layer breakdown of GoogLeNet forwarding time cost}
	\label{layer-speed}
	\begin{center}
		\resizebox{0.46\textwidth}{!}{%
			\begin{tabular}{ccccc}
				\multicolumn{1}{c}{\bf Layer} &\multicolumn{1}{c}{\bf GoogLeNet} &\multicolumn{1}{c}{\bf \begin{tabular}{@{}c@{}}GoogLeNet \\ -Tucker\end{tabular}} &\multicolumn{1}{c}{\bf \begin{tabular}{@{}c@{}}GoogLeNet \\ -Slim (ours)\end{tabular}} &\multicolumn{1}{c}{\bf \begin{tabular}{@{}c@{}}GoogLeNet \\ -Slim-Tucker (ours)\end{tabular}}\\ 
				\hline \\
				conv1         & 94.92 ms  		& 87.85 ms  & 8.424 ms  & 6.038 ms \\
				conv2         & 153.8 ms  		&	179.4 ms		& 16.62 ms  & 9.259 ms\\
				inception\_3a   & 55.23 ms  	& 85.62 ms			& 21.17 ms & 9.459 ms \\
				inception\_3b    & 98.41 ms  	& 66.51 ms			& 25.94 ms  & 11.74 ms      \\
				inception\_4a    & 30.53 ms  	& 36.91 ms			& 16.80 ms   & 8.966 ms     \\
				inception\_4b   & 32.60 ms  	& 41.82 ms			& 20.29 ms   & 11.65 ms    \\
				inception\_4c  & 46.96 ms  	& 30.46 ms			&  18.71 ms    & 9.102 ms   \\
				inception\_4d  & 36.88 ms    & 21.05 ms				&  24.67 ms    & 10.05 ms   \\
				inception\_4e    & 48.24 ms  	& 32.19 ms		& 28.08 ms    & 14.08 ms  \\
				inception\_5a    & 24.64 ms  	& 14.43 ms			& 10.69 ms    & 5.36 ms    \\
				inception\_5b    & 24.92 ms  	& 15.87 ms			& 14.58 ms    & 6.65 ms    \\
				loss3    & 				 3.014 ms  	 & 2.81 ms			&  2.97 ms     & 2.902 ms  \\
				\textbf{Total}   & \textbf{651.4 ms} & \textbf{614.9 ms (1.06x)}  & \textbf{210.6 ms (3.09x)}   & \textbf{106.3 ms (6.13x)}      \\
		\end{tabular}}
	\end{center}
\end{table}

%Not limited to mobile platform of Samsung , 
We evaluate the speed-up on other popular processors besides Galaxy S5, including (1) Moto E: a low-end mobile ARM CPU, (2) Samsung Galaxy S6: a high-end mobile ARM CPU, (3) Macbook Pro: an Intel x86 CPU, and (4) Titan X: a powerful server GPU.  We demonstrate the experimental results in Table \ref{device-speed} and observe significant speed-up on various types of CPUs. Even on the low-end mobile CPU (i.e., Moto E), around 200 ms model forwarding time is achieved by combining tensor weights compression method. Finally, comparing the proposed approach with SqueezeNet~\cite{SqueezeNet}, we are very excited to see that our optimization approach can obtain faster speed on all mobile devices with much higher accuracy (the Top-5 accuracy for SqueezeNet is 80\%) as listed in Table \ref{device-speed}. % (80\% for Top-5). 
%%In Table \ref{device-speed}, we measure the speed-up on various types of devices including: 
%
\begin{table}
	\caption{Execution time using different methods (including SqueezeNet) on different processors}
	\label{device-speed}
	\begin{center}
		\resizebox{0.46\textwidth}{!}{%
			\begin{tabular}{cccccc}
				\multicolumn{1}{c}{\bf Device} &\multicolumn{1}{c}{\bf GoogLeNet} &\multicolumn{1}{c}{\bf \begin{tabular}{@{}c@{}}GoogLeNet \\ -Tucker\end{tabular}} &\multicolumn{1}{c}{\bf \begin{tabular}{@{}c@{}}GoogLeNet \\ -Slim\end{tabular}} &\multicolumn{1}{c}{\bf \begin{tabular}{@{}c@{}}GoogLeNet \\ -Slim-Tucker\end{tabular}} &\multicolumn{1}{c}{\bf SqueezeNet}\\ 
				\hline \\
				Moto E   					 &	  1168.8 ms & 897.9 ms  & 406.7 ms  & \textbf{213.3 ms} & 291.4 ms\\
				Samsung Galaxy S5   &  651.4 ms & 614.9 ms   & 210.6 ms  & \textbf{106.3 ms} & 136.3 ms\\
				Samsung Galaxy S6   & 424.7 ms  & 342.5 ms & 107.7 ms  & \textbf{65.34  ms} &  75.34 ms \\
				Macbook Pro (CPU)     & 91.77 ms & 78.22 ms & 23.69 ms  & \textbf{15.18 ms } &  17.63 ms  \\
				Titan X    						&  10.17 ms & 10.74 ms  & 6.57 ms   & 7.68 ms   & \textbf{3.29 ms}  \\
		\end{tabular}}
	\end{center}
\end{table}

\begin{table}%[b]
	\caption{Storage, Energy and Runtime-Memory Comparison}
	\label{storage-memory}
	\begin{center}
		\resizebox{0.46\textwidth}{!}{%
			\begin{tabular}{ccccc}
				\multicolumn{1}{c}{\bf Model}  &\multicolumn{1}{c}{\bf Energy} &\multicolumn{1}{c}{\bf Storage} &\multicolumn{1}{c}{\bf Memory} &\multicolumn{1}{c}{\bf \begin{tabular}{@{}c@{}}Max Batch Size \\ on Titan X\end{tabular}} \\ 
				\hline \\
				GoogLeNet       		& 984 mJ & 26.72 MB & 33.2 MB & 350 \\
				GoogLeNet-Tucker   & 902 mJ   & 14.38 MB & 35.8 MB & 323\\
				GoogLeNet-Slim   & \textbf{447 mJ (2.2x)}  & 23.77 MB & 13.2 MB &  \textbf{882 (2.52x)}\\
				GoogLeNet-Slim-Tucker  & \textbf{226 mJ (4.4x)} & 11.99 MB & 14.8 MB & \textbf{785 (2.24x)}\\
				SqueezeNet  & 288 mJ & 4.72 MB & 36.5 MB & 321 \\
				%MobileNet~\cite{mobilenets}  & 288 mJ & 16.2 MB & 36.5 MB & 321 \\
		\end{tabular}}
	\end{center}
\end{table}

{\bf \underline{Energy, Storage and Runtime-Memory Cost}}
We measure the energy cost of each compared model using PowerTutor Android app on Samsung Galaxy S5 (similar results are obtained on other mobile devices). The original GoogLeNet consumes almost 1 Joule per image while GoogLeNet-Slim consumes only 447 mJ. Applying tucker decomposition further reduces the energy cost to only 1/4 at 226 mJ. When deploying to the mobile devices, we remove the loss1 and loss2 branches from the trained models so that the storage cost of each model is reduced by 24.33 MB. GoogLeNet-Slim which achieves significant speed-up does not save much storage cost compared to the original GoogLeNet model. However, for modern mobile devices, storage is not a scarce resource (e.g., Samsung Galaxy S5 has 16 GB or 32 GB storage), so a 20 MB deep learning model is ``affordable" on mobile devices. Meanwhile, we can always perform the tensor weights compression method to further reduce the storage cost.

Another benefit of layer slimming is run-time memory saving. The generated GoogLeNet-Slim model reduces the number of layers and consumes only 13.2 MB to process one image. This feature is also very useful for the cloud based deep learning service which can process a much larger batch at one run. As shown in table~\ref{storage-memory}, one Titan X GPU can run a batch size of 882 with the GoogLeNet-Slim model while the original GoogLeNet can only allow a batch size of 350. On the other hand, SqueezeNet though has much less trained parameters, it has much larger run-time memory impact due to the increased number of layers.

%\textcolor{red}{!!!!add visualization???}
%\subsubsection{Visualization}

\subsection{AlexNet and ResNet}
%To further analyze the generality of proposed \pname{} acceleration framework, 
%Besides GoogLeNet, 
We  apply the proposed framework to other popular deep neural structures: AlexNet~\cite{alexnet} and ResNet~\cite{resnet}. Note that we did not apply tensor weights compression to those two models which can further reduce the model forwarding latency. First, we study the classical AlexNet model. We apply streamline slimming approach to re-generate new slim layers by merging the first two convolution layers followed by LRN layers. We illustrate the result in Table \ref{alex-result}. This indicates that by applying slimming to the first two layers, the model forwarding time of AlexNet is reduced from 445 ms to 274 ms on Samsung Galaxy S5, and the Top-5 accuracy is slightly dropped from 80.03\% to 79.57\%. 

\begin{table}
	\caption{AlexNet Result (Accuracy vs. Speed vs. Energy cost)}
	\label{alex-result}
	\begin{center}
		\resizebox{0.46\textwidth}{!}{%
			\begin{tabular}{ccccc}
				\multicolumn{1}{c}{\bf Step}  &\multicolumn{1}{c}{\bf Slim Layer(s)} &\multicolumn{1}{c}{\bf Top-5 Accuracy} &\multicolumn{1}{c}{\bf Speed-up}&\multicolumn{1}{c}{\bf Energy Cost}\\ 
				\hline \\
				0    	& N/A  & 80.03\%  &  445 ms    &  688 mJ  \\
				1    	& conv1+norm1 $\,\to\,$ conv1  & 79.99\%  & 343 ms (1.29x) &  555 mJ (1.24x) \\
				2  	& conv2+norm2 $\,\to\,$ conv2  & 79.57\%  & 274 ms (1.63x)  &  458 mJ (1.51x) \\
		\end{tabular}}
	\end{center}
\end{table}

\begin{table}
	\caption{ResNet (conv1-res2a) Result (Accuracy vs. Speed up). For each step, we absorb the ``BatchNorm" and ``Scale" layers to the bottom convolution layer. }
	\label{resnet-result}
	\begin{center}
		\resizebox{0.46\textwidth}{!}{%
			\begin{tabular}{ccccc}
				\multicolumn{1}{c}{\bf Step}  &\multicolumn{1}{c}{\bf Slim Layer(s)} &\multicolumn{1}{c}{\bf Top-5 Accuracy} &\multicolumn{1}{c}{\bf Speed-up}&\multicolumn{1}{c}{\bf Runtime-Mem Batch32}\\ 
				\hline \\
				0    	& N/A  & 92.36\%  &  189 ms     & 2505 MB \\
				1    	& conv1  & 92.13\%  &  162 ms (1.17x)   & 2113 MB (1.19x)\\
				2    	& res2a\_branch1  & 92.01\%  & 140 ms (1.35x)   & 1721 MB (1.46x)\\
				3   	& res2a\_branch2a-2c  & 91.88\%  & 104 ms (1.82x) & 1133 MB (2.21x)\\
		\end{tabular}}
	\end{center}
\end{table}

We  apply the acceleration scheme to the more advanced ResNet model. In the experiment, we use the popular 50-layer ResNet-50 model as baseline. We mainly apply the acceleration framework to conv1 and res2a layers (res2a has 2 branches; one branch has 1 convolution layer and another branch has 3 convolution layers).  We present the result in Table \ref{resnet-result}. The time latency on Samsung Galaxy S5 for the processed layers (i.e., conv1 and res2a) is reduced from 189 ms to 104 ms. Moreover, the run-time memory cost is reduced by 2.21x. The accuracy is only slightly reduced. Meanwhile, since batch normalization layers can be directly merged to their preceding convolutional layers using Eq.(\ref{EQ:Y_j=}), additional 30\%-45\% speed-up can be achieved without accuracy loss as indicated by Figure \ref{fig:resnet50_fine}.

%In the future, we would apply the proposed method on the entire ResNet model.

\section{Related Work}

Reducing the model size and accelerating the running speed are two general ways to facilitate the deployment of deep learning models on mobile devices. Many efforts have been spent on reducing the model size.  In particular,  most works focus on optimizing tensor-layers to reduce the model size due to the high redundancy in the learned parameters in tensor layers of a given deep model.  Vanhoucke et al. \cite{vanhoucke2011improving} proposed a fixed-point implementation with 8-bit integer activation to reduce the number of parameter used in the deep neural network while  \cite{gong2014compressing} applied vector quantization to  compressed deep convnets.  These approaches, however, mainly focus on  compressing the fully connected layer without considering the convolutional layers. 
To reduce the parameter size,  Denten {\it et al.} \cite{denton2014exploiting} applied the low-rank approximation approach to compress the neural networks with linear structures. Afterwards, hashing functions, which have been widely adopted to improve efficiency of traditional computer vision tasks~\cite{hasing0,du_hashing}, were utilized to reduce model sizes by randomly grouping connection weights~\cite{chen2015compressing}. More recently, Han et al.\cite{DeepCompression} proposed to effectively reduce model size and achieve speed-up by the combination of pruning, Huffman coding and quantization. However, the benefits can only be achieved by running the compressed model on a specialized processor \cite{han2016eie}.

Recently, SqueezeNet~\cite{SqueezeNet} has became widely used for its much smaller memory cost and increased speed. However, the near-AlexNet accuracy is far below the state-of-the-art performance.  Compared with these two newly networks, our approach has much better accuracy with more significant acceleration. Springenberg et al. \cite{DBLP:journals/corr/SpringenbergDBR14} showed that the conv-relu-pool substructure may not be necessary for a neural network architecture. The authors find that max-pooling can simply be replaced by another convolution layer with increased stride without loss in accuracy. Different from this work,  \pname{} replaces a complete substructure (e.g., conv-relu-pool, conv-relu-LRN-pool) with a single convolution layer, and aims to speed-up the model execution on the mobile device. In addition, our work slims a well-trained network by relearning the merged layers and does not require to train from scratch. Essentially, \pname{}  can be considered as a special form of distillation \cite{distillation} that transfers the knowledge from the cumbersome substructure of multiple layers to the new accelerated substructure.

\section{Conclusion and Future Work}

An acceleration framework -- \pname{} is proposed to speed up the neural networks with satisfactory accuracy,  which operates by re-generating new tensor layers from optimizing non-tensor layers and their neighborhood units.  %As a generic method, 
\pname{}  
is also compatible with  state-of-the-art deep models like GoogleNet and ResNet, where most parameter weight compression methods failed. By applying \pname{} on different deep learning architectures, 
we obtain significant speed-up on different processors (including mobile processors), which will readily facilitate the deployment of deep learning models on mobile devices
%and make it possible to provide more smart and intelligent services
in the new AI tide.   

In future work, we plan to integrate \pname{} with other state-of-the-art tensor layer compression methods and also extend our evaluation to heterogeneous mobile processors such as mobile GPUs, DSPs. We envision that understanding the characteristics of these different chips can help us design better algorithms and further improve the model execution efficiency. 

\section{Acknowledgements}

We thank all the anonymous reviewers for their insightful comments and valuable suggestions.

{
\small
\bibliographystyle{aaai}
\bibliography{iclr2017_conference}
}
%{\small
%\bibliographystyle{aaai}
%\bibliography{iclr2017_conference}
%}

\end{document}